\documentclass[a4paper,fleqn]{cas-dc}

\usepackage[numbers,sort&compress]{natbib}

\usepackage{amsmath, amsfonts, amssymb}
\usepackage{amsthm}
\usepackage{algorithm}
\usepackage{algpseudocode}
\usepackage{booktabs}
\usepackage{multirow}
\usepackage{graphicx}
\usepackage{subcaption}
\usepackage{xcolor}
\usepackage{hyperref}
\usepackage{enumitem}

\newtheorem{proposition}{Proposition}
\newdefinition{remark}{Remark}

\begin{document}
\let\WriteBookmarks\relax
\def\floatpagepagefraction{1}
\def\textpagefraction{.001}

\shorttitle{Geometric Coherence of Global Aggregation in Federated GNNs}
\shortauthors{C.P. Kabgere and S.S. Shylaja}

\title[mode=title]{On the Geometric Coherence of Global Aggregation in
Federated Graph Neural Networks}

\author[1]{Chethana Prasad Kabgere}[%
  role=Corresponding Author,
  orcid=0009-0004-0364-0334]
\cormark[1]
\fnmark[1]
\ead{chethana1999@gmail.com}

\credit{Conceptualization, Methodology, Software, Validation,
        Formal Analysis, Investigation, Data Curation,
        Writing -- Original Draft, Writing -- Review \& Editing,
        Visualization, Project Administration}

\affiliation[1]{organization={Department of Computer Science and Engineering,
                PES University},
                city={Bengaluru},
                state={Karnataka},
                postcode={560085},
                country={India}}

\author[1]{Shylaja S\,S}[%
  orcid=0000-0003-2628-8973]
\fnmark[2]

\credit{Methodology, Investigation, Data Curation,
        Writing -- Review \& Editing, Supervision}

\cortext[1]{Corresponding author.}
\fntext[1]{Research Scholar, PES University, Karnataka, India.}
\fntext[2]{Professor, Department of Computer Science and Engineering,
           PES University, Bengaluru, India.}

\begin{abstract}
Federated learning over graph-structured data exposes a fundamental mismatch between standard aggregation mechanisms and the operator nature of graph neural networks (GNNs). While federated optimization treats model parameters as elements of a shared Euclidean space, GNN parameters induce graph-dependent message-passing operators whose semantics depend on underlying topology. Under structurally and distributionally heterogeneous client graph distributions, local updates correspond to perturbations of distinct operator manifolds. Linear aggregation of such updates mixes geometrically incompatible directions, producing global models that converge numerically yet exhibit degraded relational behavior.We formalize this phenomenon as a geometric failure of global aggregation in cross-domain federated GNNs, characterized by destructive interference between operator perturbations and loss of coherence in message-passing dynamics. This degradation is not captured by conventional metrics such as loss or accuracy, as models may retain predictive performance while losing structural sensitivity.To address this, we propose GGRS (Global Geometric Reference Structure), a server-side aggregation framework operating on a data-free proxy of operator perturbations. GGRS enforces geometric admissibility via directional alignment, subspace compatibility, and sensitivity control, preserving the structure of the induced message-passing operator.
\end{abstract}


\begin{keywords}
Federated GNN \sep
Cross-Domain Federated Learning \sep
Geometric Coherence \sep
Message-Passing Operators \sep
Gradient Alignment \sep
Aggregation Regulation
\end{keywords}

\maketitle

\section{Introduction}
\label{sec:intro}

Federated Learning (FL) addresses distributed optimization problems in
which a shared model is trained across multiple decentralized clients
under communication and data-access constraints~\cite{mcmahan2017fedavg}.
At each communication round, locally optimized parameter updates are
aggregated at a global server to form a new global model, which is
subsequently broadcast to clients for further local optimization.
Cross-domain federated machine learning~\cite{9887815} refers to a
federated setting in which participating clients hold data originating
from distinct domains or distributions, rather than being non-IID
partitions of a single dataset.
In this regime, each client's local data distribution may differ
substantially in feature statistics, label semantics, or structural
characteristics, leading to stronger heterogeneity than conventional FL.
The goal remains to learn a shared global model via decentralized
optimization, but aggregation becomes more challenging due to
domain-induced misalignment across client updates.
Classical aggregation schemes such as Federated Averaging (FedAvg)
implicitly assume that client updates lie in a common parameter space
where linear combination preserves model semantics.
GNNs~\cite{kipf2017gcn} fundamentally violate this assumption.
A GNN does not merely define a parameterized predictor, but rather a
\emph{parameterized message-passing operator} acting on graph-structured
domains~\cite{kipf2017gcn,gilmer2017mpnn}.
For a fixed architecture, learnable parameters induce a family of linear
and nonlinear operators that govern how node representations are updated
as a function of neighborhood structure, aggregation depth, and
interaction strength.
Consequently, parameter updates in GNNs correspond to perturbations of
an operator that determines the geometry of information flow over the
graph, rather than independent numerical adjustments.

In federated GNN settings~\cite{he2021fedgraphnn}, each client optimizes
this operator with respect to its own graph distribution.
Let $\mathcal{G}_k \sim \mathcal{D}_k$ denote the graph distribution
observed by client $k$.
Differences in topology, degree distribution, sparsity, and homophily
across $\{\mathcal{D}_k\}$ induce \emph{structurally distinct optimal
message-passing operators}, even under a shared task
objective~\cite{li2018deeper}.
Local training therefore shapes parameters toward client-specific
propagation regimes, such as strong local aggregation, smooth long-range
diffusion, or sparse selective propagation.
These regimes correspond to distinct regions in the space of admissible
operators.
Standard global aggregation ignores this structure.
By linearly averaging parameter updates, the server constructs a global
parameter vector that is not guaranteed to correspond to a coherent
message-passing operator for any of the underlying graph distributions.
From an operator-theoretic perspective, this procedure amounts to mixing
perturbations of distinct operators without enforcing compatibility of
their dominant directions, invariant subspaces, or sensitivity profiles.
The resulting global model may therefore exhibit \emph{operator
degeneration}, in which the induced message-passing transformation loses
expressive capacity despite numerical convergence of parameters.
Inference in GNNs relies on the repeated application of the learned
message-passing operator to propagate information across graph
neighborhoods.
When aggregation disrupts the operator's geometric structure, the global
model can exhibit diminished sensitivity to graph perturbations, a
collapse of effective propagation depth, or excessive contraction of node
representations.
These effects are closely related to over-smoothing and over-squashing
phenomena studied in non-federated GNNs~\cite{oono2019oversmoothing,
alon2021bottleneck}.
Importantly, such failures may not immediately manifest in task loss or
accuracy, as the model can still fit marginal label distributions while
relying increasingly on non-relational cues.
The impact of geometric degradation is amplified by the iterative nature
of FL.

The global model produced by the server defines the initialization for
all clients in the subsequent round.
If this model encodes a distorted message-passing operator, local
optimization must first compensate for this distortion before adapting
to the local graph structure.
This induces corrective gradients that are misaligned across clients,
increasing update variance and amplifying destructive interference at
the next aggregation step.
Over multiple rounds, this feedback loop leads to progressive
destabilization of the global operator, even as standard convergence
criteria are satisfied.
Existing approaches to federated GNN learning primarily address
heterogeneity through personalization, client clustering, or adaptive
weighting
strategies~\cite{he2021fedgraphnn,nguyen2025crossdomain,
li2024openfglcomprehensivebenchmarksfederated}.
While effective in mitigating performance degradation, these methods
do not explicitly characterize aggregation as an operation on
message-passing operators, nor do they impose constraints that preserve
the geometric integrity of the induced transformations.
As a result, failure modes arising from \emph{operator-level
incompatibility} rather than optimization error remain unaddressed.

Our work adopts a geometric perspective on federated GNN aggregation,
viewing client updates as perturbations in the space of
message-passing operators.
From this viewpoint, effective aggregation requires preserving structural
properties such as directional alignment of operator updates, consistency
of dominant subspaces, and stability of sensitivity to neighborhood
structure.
This formulation motivates the development of server-side mechanisms
that regulate aggregation to maintain operator coherence across
heterogeneous clients, thereby preventing silent degradation of global
message-passing capacity in federated GNNs.

\begin{figure*}[t]
  \centering
  \includegraphics[width=\textwidth]{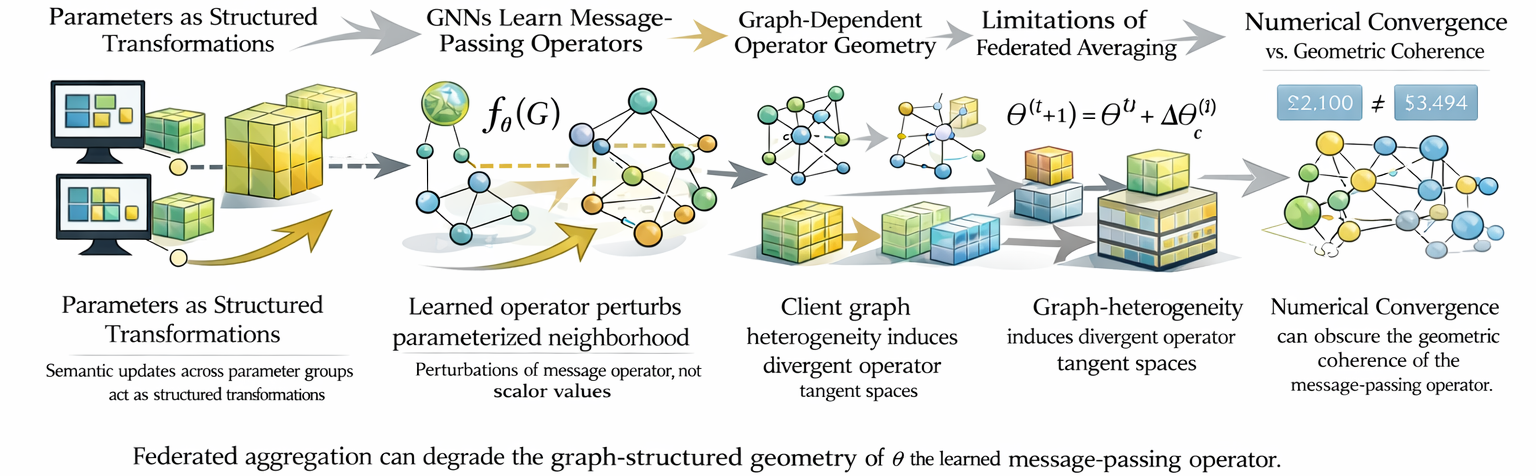}
  \caption{Geometric failure modes in federated GNN aggregation.
           Federated aggregation can degrade the graph-structured
           geometry of the learned message-passing operator~$\theta$
           through five interconnected failure mechanisms, from
           parameter misinterpretation to numerical convergence masking
           operator-level incoherence.}
  \label{fig:geometric_observations}
\end{figure*}

\subsection{Geometric Observations in Federated GNNs}
\label{sec:observations}

The following observations characterize the geometric behavior of GNNs
under federated aggregation.
They are not specific to any particular architecture or optimization
strategy, but arise from the fundamental interaction between
graph-dependent message passing and parameter aggregation across
heterogeneous clients.

\noindent\textit{Parameters as Structured Transformations.}
Although model parameters are numerically represented as vectors or
matrices, learning behavior does not arise from individual scalar values
but from coordinated changes across parameter groups.
In GNNs, parameters jointly define structured transformations that govern
how information propagates across graph neighborhoods.
Consequently, the semantic meaning of a parameter update is determined by
its direction and interaction with other parameters, rather than by its
magnitude alone.

\noindent\textit{GNNs Learn Message-Passing Operators.}
A GNN induces a parameterized message-passing operator whose repeated
application updates node representations as a function of graph
connectivity.
In a fixed architecture, parameters specify how neighborhood information
is aggregated, how far information propagates, and how sensitive node
embeddings are to structural perturbations.
Parameter updates therefore correspond to perturbations of this operator
rather than independent numerical adjustments.

\noindent\textit{Graph-Dependent Operator Geometry.}
The operator induced by a GNN is graph-dependent.
The same parameter configuration can induce distinct transformations when
applied to graphs with different adjacency structures or feature
distributions.
As a result, the Jacobian of the GNN with respect to its parameters
depends on the underlying graph, implying that parameter updates computed
on different clients generally inhabit distinct operator tangent spaces.

\noindent\textit{Implicit Assumptions of Federated Averaging.}
Standard federated aggregation mechanisms implicitly assume that client
updates are commensurate in a shared Euclidean parameter space, such that
linear combination preserves model semantics.
This assumption is violated in federated GNNs, where client updates
encode perturbations of graph-dependent operators.
Linear aggregation therefore mixes updates without enforcing
compatibility of their induced transformations.

\noindent\textit{Numerical Convergence Versus Geometric Coherence.}
Federated training can exhibit numerical convergence in terms of loss
reduction or gradient norms while simultaneously degrading the geometric
coherence of the message-passing operator.
Such degradation may manifest as diminished sensitivity to neighborhood
structure, collapse of effective propagation depth, or contraction of
representations into low-dimensional subspaces.

\subsection{Limitations of Accuracy-Centric Evaluation}
\label{sec:limitations_acc}

Machine learning systems are commonly evaluated using task-level metrics
such as accuracy, F1-score, or loss.
These metrics implicitly assume that improvements or degradation in model
performance directly reflect the quality of the learned representations.
While this assumption holds reasonably well for centralized learning on
i.i.d.\ data, it becomes insufficient in federated and graph-based
learning settings, where model behavior depends critically on relational
structure and iterative aggregation dynamics.

\noindent\textbf{Accuracy as a Performance Proxy in Conventional
Learning.}
In conventional supervised learning, accuracy measures the alignment
between model predictions and ground-truth labels on a fixed data
distribution.
Optimization procedures are designed to minimize empirical risk, and
convergence in loss is typically interpreted as convergence in
representation quality.
Under these assumptions, parameter averaging or gradient-based updates
preserve semantic meaning, and degradation in performance is readily
observable through scalar metrics.

\noindent\textbf{Limitations of Accuracy Metrics in GNNs.}
GNNs differ fundamentally from standard models in that predictions are
produced through iterative message passing over graph structure.
The expressive power of a GNN depends not only on parameter values, but
on how those parameters govern information propagation across
neighborhoods, graph depth, and connectivity patterns.
As a result, a GNN can exhibit stable accuracy while its internal
message-passing dynamics degrade.
In particular, accuracy does not capture whether neighborhood information
is meaningfully integrated or collapsed, long-range dependencies are
preserved or suppressed, node representations remain sensitive to
structural perturbations, or propagation depth remains effective across
layers.
These properties are geometric and operator-level in nature, and their
degradation may remain invisible to scalar evaluation metrics.

\noindent\textbf{Accuracy Degradation in Federated Settings.}
In FL, accuracy is further confounded by the absence of a centralized
validation distribution.
Each client evaluates performance on its local data, and the global model
is rarely assessed on a distribution that reflects the union of client
graph structures.
As a result, a global model may appear to perform adequately on average
while being misaligned with all individual client distributions.
Moreover, federated training introduces an iterative feedback loop: the
global model is repeatedly broadcast to clients and used as the
initialization for local optimization.
Even small distortions in model behavior introduced by aggregation can
compound over rounds, leading to progressive degradation that is not
immediately reflected in early-round accuracy measurements.
Empirically, federated GNNs under heterogeneous graph distributions
exhibit several failure modes poorly explained by accuracy alone:
over-smoothing acceleration (node representations collapse into low-rank
subspaces more rapidly than in centralized training);
over-squashing amplification (long-range information is increasingly
compressed, reducing the effective receptive field);
update oscillation (local updates increasingly counteract global updates
across rounds); and
representation drift (the semantic meaning of intermediate embeddings
changes across rounds despite stable task loss).

\noindent\textbf{Root Cause: Aggregation-Induced Geometric Distortion.}
The common factor underlying these failure modes is the interaction of
heterogeneous client updates at the global server.
Client updates encode graph-dependent operator perturbations.
Standard aggregation mechanisms combine these updates in parameter space
without enforcing compatibility of their induced transformations.
This process can converge numerically while simultaneously distorting
the geometry of the global message-passing operator.
Such distortion alters propagation directionality, collapses admissible
subspaces, and destabilizes sensitivity to graph structure --- effects
that directly impair relational learning but may not immediately impact
prediction accuracy.
These observations indicate that federated GNN training cannot be
reliably assessed or stabilized using accuracy-centric metrics alone.
Consequently, effective federated GNN systems require mechanisms that
explicitly preserve operator-level geometric coherence during
aggregation.

\section{Problem Formulation}
\label{sec:problem}

\subsection{Federated GNN Learning Setup}

We consider a federated learning (FL) system comprising $K$ clients and
a central server.
Each client $k \in [K] \triangleq \{1,\dots,K\}$ holds a private graph
\begin{equation}
  \mathcal{G}_k = \bigl(\mathcal{V}_k,\, \mathcal{E}_k,\,
                         \mathbf{X}_k \in \mathbb{R}^{n_k \times d},\,
                         \mathbf{Y}_k\bigr),
\end{equation}
drawn from a client-specific distribution $\mathcal{D}_k$.
Heterogeneity is encoded by the assumption $\mathcal{D}_k \neq
\mathcal{D}_{k'}$ for $k \neq k'$, arising from structural differences
in graph topology, feature statistics, and label distributions.

All clients share a GNN architecture with parameter vector
$\theta \in \mathbb{R}^p$.
At communication round $t$, local training at client $k$ solves
\begin{equation}
  \theta_k^{(t)} \;=\; \arg\min_{\theta}\;
  \mathbb{E}_{\mathcal{G}_k \sim \mathcal{D}_k}
  \bigl[\mathcal{L}(f_\theta(\mathcal{G}_k),\, \mathbf{Y}_k)\bigr].
\end{equation}
Clients transmit parameter updates $\Delta\theta_k^{(t)} =
\theta_k^{(t)} - \theta^{(t)}$ to the server, which forms the next
global model via weighted aggregation:
\begin{equation}
  \theta^{(t+1)} = \theta^{(t)} + \sum_{k=1}^K w_k \Delta\theta_k^{(t)},
  \qquad w_k \ge 0,\quad \sum_k w_k = 1.
  \label{eq:fedavg}
\end{equation}

\subsection{GNN Parameters as Graph-Dependent Operator Families}

An $L$-layer GNN is realized by iterated message-passing:
\begin{equation}
  \mathbf{H}^{(l+1)} = \Phi_{\theta^{(l)}}\!\bigl(\mathbf{H}^{(l)},
  \mathcal{G}\bigr),
  \quad l = 0,\dots,L-1,\quad \mathbf{H}^{(0)} = \mathbf{X}.
\end{equation}
For a graph convolutional layer,
$\Phi_\theta(\mathbf{H},\mathcal{G}) =
\sigma(\tilde{\mathbf{A}}\mathbf{H}\mathbf{W}_\theta)$,
where $\tilde{\mathbf{A}} =
\hat{\mathbf{D}}^{-1/2}\hat{\mathbf{A}}\hat{\mathbf{D}}^{-1/2}$
is the symmetrically-normalized adjacency with self-loops.
Let $\mathcal{T}_\theta^{\mathcal{G}}: \mathbb{R}^{n\times d} \to
\mathbb{R}^{n \times c}$ denote the full $L$-layer operator induced by
$(\theta, \mathcal{G})$.

A key structural observation is that $\theta$ does \emph{not} specify a
single fixed operator but rather an entire \emph{family}
$\{\mathcal{T}_\theta^{\mathcal{G}}\}_{\mathcal{G}}$, one member per
graph.
For distinct client graphs $\mathcal{G}_a \neq \mathcal{G}_b$,
\begin{equation}
  \mathcal{T}_\theta^{\mathcal{G}_a} \;\neq\;
  \mathcal{T}_\theta^{\mathcal{G}_b},
\end{equation}
even though $\theta$ is identical.
Consequently, the parameter space $\mathbb{R}^p$ is not a neutral vector
space: directions in $\mathbb{R}^p$ carry different operational meaning
depending on which graph they act upon.

\subsection{Operator Perturbations and Geometric Incompatibility}

For client $k$, a first-order expansion around $\theta^{(t)}$ gives
\begin{equation}
  \mathcal{T}_{\theta^{(t)} + \Delta\theta_k}^{\mathcal{G}_k}
  \;\approx\;
  \mathcal{T}_{\theta^{(t)}}^{\mathcal{G}_k}
  + \underbrace{\mathcal{J}_{\theta^{(t)}}^{\mathcal{G}_k}\,\Delta\theta_k}_{
      \displaystyle\Delta\mathcal{T}_k
    },
  \label{eq:linearization}
\end{equation}
where $\mathcal{J}_\theta^{\mathcal{G}_k} \in \mathbb{R}^{(n_k c)
\times p}$ is the Jacobian of the vectorized output with respect to
$\theta$ on graph $\mathcal{G}_k$.
The term $\Delta\mathcal{T}_k \triangleq
\mathcal{J}_{\theta^{(t)}}^{\mathcal{G}_k}\Delta\theta_k$ is the
\emph{induced operator perturbation}: it lives in the tangent space of
the operator manifold at $\mathcal{T}_{\theta^{(t)}}^{\mathcal{G}_k}$.

Equation~\eqref{eq:fedavg} aggregates parameter vectors, yielding a
global operator perturbation on any evaluation graph $\mathcal{G}$:
\begin{equation}
  \Delta\mathcal{T}^{\mathcal{G}}
  = \mathcal{J}_{\theta^{(t)}}^{\mathcal{G}}
    \sum_{k=1}^K w_k \Delta\theta_k.
  \label{eq:global_op}
\end{equation}
Because each $\Delta\theta_k$ was optimized under
$\mathcal{J}_{\theta^{(t)}}^{\mathcal{G}_k}$ rather than
$\mathcal{J}_{\theta^{(t)}}^{\mathcal{G}}$, the sum
in~\eqref{eq:global_op} conflates vectors from geometrically
incompatible tangent spaces.
Unless $\mathcal{J}_{\theta^{(t)}}^{\mathcal{G}_k} \approx
\mathcal{J}_{\theta^{(t)}}^{\mathcal{G}_{k'}}$ for all $k,k'$ --- an
assumption that fails whenever clients differ substantially in graph
structure~\cite{li2018deeper,kipf2017gcn} --- the resulting global
operator lacks coherent directional structure.

\medskip
\noindent\textbf{Destructive interference.}
Consider two clients $i,j$ satisfying
\begin{equation}
  \bigl\langle
    \mathcal{J}_\theta^{\mathcal{G}_i}\Delta\theta_i,\;
    \mathcal{J}_\theta^{\mathcal{G}_j}\Delta\theta_j
  \bigr\rangle < 0.
  \label{eq:destructive}
\end{equation}
Their combined contribution to $\Delta\mathcal{T}^{\mathcal{G}}$
partially cancels in operator space, even when $\|\Delta\theta_i\|$ and
$\|\Delta\theta_j\|$ are each large.
This cancellation reduces the effective rank and sensitivity of
$\Delta\mathcal{T}^{\mathcal{G}}$, contracting message-passing dynamics
in a manner invisible to parameter norms or training loss.

\medskip
\noindent\textbf{Training feedback amplification.}
The distorted $\theta^{(t+1)}$ is broadcast to all clients as the
initialization for round $t{+}1$.
Each client must now apply a corrective update to recover operator
geometry suited to $\mathcal{G}_k$, but these corrective updates are
themselves mutually misaligned, amplifying interference at the subsequent
aggregation step.
This feedback loop constitutes the central failure mode in heterogeneous
federated GNN training.

\medskip
This leads us to our problem statement: given a federated system with
heterogeneous graph distributions $\{\mathcal{D}_k\}_{k=1}^K$, design a
server-side aggregation mechanism that preserves the geometric coherence
of the induced message-passing operator --- specifically, directional
alignment, low-dimensional subspace structure, and sensitivity stability
--- across communication rounds, without access to client graphs or data.

\section{Theoretical Foundation: Geometric Regulation}
\label{sec:theory}

\subsection{Federated Aggregation as Tangent-Space Averaging}

Recall from~\eqref{eq:linearization} that local update $\Delta\theta_k$
induces an operator perturbation
$\Delta\mathcal{T}_k =
\mathcal{J}_\theta^{\mathcal{G}_k}\Delta\theta_k$.
The FedAvg global perturbation is
\begin{equation}
  \Delta\mathcal{T}^{\text{FedAvg}}
  = \sum_{k=1}^K w_k
    \mathcal{J}_\theta^{\mathcal{G}_k}\Delta\theta_k.
  \label{eq:fedavg_op}
\end{equation}
Equation~\eqref{eq:fedavg_op} is a weighted sum of vectors from distinct
tangent spaces.
Its deficiency is quantified via the cosine misalignment:
\begin{equation}
  \cos\!\angle\bigl(\Delta\mathcal{T}_i,\,\Delta\mathcal{T}_j\bigr)
  =
  \frac{\langle \mathcal{J}_\theta^{\mathcal{G}_i}\Delta\theta_i,\;
               \mathcal{J}_\theta^{\mathcal{G}_j}\Delta\theta_j\rangle}
       {\|\mathcal{J}_\theta^{\mathcal{G}_i}\Delta\theta_i\|\;
        \|\mathcal{J}_\theta^{\mathcal{G}_j}\Delta\theta_j\|}.
  \label{eq:cosine_op}
\end{equation}
When~\eqref{eq:cosine_op} is negative for a significant fraction of
client pairs, the aggregated perturbation suffers destructive
interference as in~\eqref{eq:destructive}.

\subsection{The Data-Free Proxy Map $\Psi$}
\label{sec:proxy}

The server has no access to $\{\mathcal{G}_k\}$ and therefore cannot
compute $\Delta\mathcal{T}_k$ exactly.
We derive a data-free proxy $\widehat{\Delta\mathcal{T}}_k$ that
captures the directional signature of $\Delta\mathcal{T}_k$ from
$\Delta\theta_k$ alone.

\paragraph{Step 1: Factoring the Jacobian.}
Write the composite Jacobian as
\begin{equation}
  \mathcal{J}_\theta^{\mathcal{G}_k}
  = \mathbf{P}_k\, \mathbf{Q},
  \label{eq:jacobian_factor}
\end{equation}
where $\mathbf{Q} \in \mathbb{R}^{p \times p}$ is a graph-independent
component capturing global parameter sensitivity
(e.g., $\mathbf{Q} = \mathbf{I}_p$ as the first-order approximation, or
the empirical Fisher information matrix averaged across clients),
and $\mathbf{P}_k$ encodes the graph-specific modulation of propagation.
Under moderate heterogeneity the dominant left singular subspace of
$\mathbf{P}_k$ is approximately shared across
clients~\cite{li2018deeper,yu2020gradient}, motivating the approximation
\begin{equation}
  \mathcal{J}_\theta^{\mathcal{G}_k} \;\approx\; \mathbf{Q},
  \quad \forall k \in [K].
  \label{eq:jacobian_approx}
\end{equation}

\paragraph{Step 2: Deriving the proxy direction.}
Under approximation~\eqref{eq:jacobian_approx} with
$\mathbf{Q} = \mathbf{I}_p$, the directional component of the operator
perturbation is characterized entirely by the unit vector
\begin{equation}
  \mathbf{z}_k \;=\;
  \frac{\Delta\theta_k}{\|\Delta\theta_k\|_2}
  \;\in\; \mathbb{S}^{p-1}.
  \label{eq:proxy_unit}
\end{equation}

\paragraph{Step 3: Formal definition of $\Psi$.}
We define the \emph{data-free proxy map}
\begin{equation}
  \Psi:\; \Delta\theta_k \;\longmapsto\;
  \widehat{\Delta\mathcal{T}}_k
  \;\triangleq\;
  \frac{\Delta\theta_k}{\|\Delta\theta_k\|_2},
  \label{eq:psi}
\end{equation}
mapping each transmitted update to its unit-normalized direction in
parameter space.
$\Psi$ is well-defined for $\Delta\theta_k \neq \mathbf{0}$ and requires
only the transmitted update vector; it does not access $\mathcal{G}_k$.

\begin{remark}
The identity approximation $\mathbf{Q} = \mathbf{I}_p$ is used
throughout for tractability.
When a global estimate of the Fisher information matrix $\hat{\mathbf{F}}$
is available, one may use $\Psi(\Delta\theta_k) =
\hat{\mathbf{F}}^{1/2}\Delta\theta_k /
\|\hat{\mathbf{F}}^{1/2}\Delta\theta_k\|_2$, providing a tighter proxy.
The identity case is equivalent to treating all parameter dimensions as
equally informative, which is a standard assumption in parameter-space
federated averaging.
\end{remark}

\subsection{Reference Direction and Geometric Admissibility}
\label{sec:reference}

\paragraph{Constructing the reference operator direction.}
Given the proxy vectors $\{\mathbf{z}_k^{(t)}\} =
\{\Psi(\Delta\theta_k^{(t)})\}$, we maintain an exponential moving
average (EMA) reference direction tracking the dominant aggregated
perturbation over time:
\begin{equation}
  \mathbf{r}^{(t)} = \alpha\,\mathbf{r}^{(t-1)}
  + (1-\alpha)\sum_{k=1}^K w_k\,\mathbf{z}_k^{(t)},
  \quad \mathbf{r}^{(0)} = \mathbf{0},
  \label{eq:ema}
\end{equation}
with decay parameter $\alpha \in (0,1)$.
After normalization, the unit reference vector is
\begin{equation}
  \hat{\mathbf{r}}^{(t)}
  = \frac{\mathbf{r}^{(t)}}{\|\mathbf{r}^{(t)}\|_2 + \epsilon_0},
  \quad \epsilon_0 > 0.
  \label{eq:ref_norm}
\end{equation}

\paragraph{Three geometric admissibility constraints.}

\noindent\textbf{C1 (Directional Consistency).}
The cosine alignment of client $k$'s proxy with the reference direction:
\begin{equation}
  \gamma_k^{(t)}
  \;=\;
  \bigl\langle \mathbf{z}_k^{(t)},\; \hat{\mathbf{r}}^{(t)} \bigr\rangle
  \;=\;
  \cos\angle\!\bigl(\Delta\theta_k^{(t)},\, \mathbf{r}^{(t)}\bigr).
  \label{eq:alignment}
\end{equation}
A client is deemed \emph{directionally consistent} if
$\gamma_k^{(t)} \ge 0$.
When $\gamma_k^{(t)} < 0$, the induced operator perturbation is
\emph{antidirectional} to the global consensus, constituting destructive
interference as in~\eqref{eq:destructive}.

\noindent\textbf{C2 (Subspace Compatibility).}
Let $\mathcal{B}^{(t)}$ denote the sliding buffer of proxy directions
over a recent window $\mathcal{W}^{(t)}$, and form the matrix
$\mathbf{Z}^{(t)} \in \mathbb{R}^{p \times |\mathcal{B}^{(t)}|}$.
The \emph{admissible subspace} is defined as the column space of the
leading $q$ right singular vectors:
\begin{equation}
  \mathbf{Z}^{(t)} = \mathbf{U}^{(t)}\boldsymbol{\Sigma}^{(t)}
  {\mathbf{V}^{(t)}}^\top,
  \qquad
  \mathcal{S}^{(t)} = \operatorname{col}\bigl(\mathbf{U}^{(t)}_{:,1:q}
  \bigr),
  \label{eq:svd}
\end{equation}
with orthogonal projector $\boldsymbol{\Pi}_{\mathcal{S}^{(t)}}
= \mathbf{U}^{(t)}_{:,1:q}\,{\mathbf{U}^{(t)}_{:,1:q}}{}^\top$.

\noindent\textbf{C3 (Sensitivity Stability).}
We impose an $\ell_2$-norm bound on the regulated proxy:
\begin{equation}
  \|\widehat{\Delta\mathcal{T}}_k^{\,\text{reg}}\|_2 \;\le\; \varepsilon,
  \label{eq:clip}
\end{equation}
preventing any single client from dominating the aggregated operator
perturbation through an abnormally large update magnitude.

\subsection{Derivation of the Geometric Regulation Operator
$\mathcal{R}$}
\label{sec:regulation}

\paragraph{Step 1: Directional attenuation (enforcing C1).}
\begin{equation}
  \beta_k^{(t)}
  =
  \begin{cases}
    1, & \text{if } \gamma_k^{(t)} \ge 0 \text{ or } t \le w, \\[4pt]
    \beta\,\bigl(1 - \gamma_k^{(t)}\bigr),
    & \text{if } \gamma_k^{(t)} < 0 \text{ and } t > w,
  \end{cases}
  \label{eq:attenuation}
\end{equation}
where $\beta \in (0,1)$ is the attenuation strength hyperparameter and
$w$ is a warm-up period.
The attenuated proxy is
$\tilde{\mathbf{z}}_k^{(t)} = \beta_k^{(t)}\;\mathbf{z}_k^{(t)}$.

\paragraph{Step 2: Subspace projection (enforcing C2).}
\begin{equation}
  \hat{\mathbf{z}}_k^{(t)}
  =
  \boldsymbol{\Pi}_{\mathcal{S}^{(t)}}\,\tilde{\mathbf{z}}_k^{(t)}.
  \label{eq:projected_proxy}
\end{equation}

\paragraph{Step 3: Sensitivity clipping (enforcing C3).}
\begin{equation}
  \widehat{\Delta\mathcal{T}}_k^{\,\text{reg}}
  =
  \min\!\left(1,\;\frac{\varepsilon}{\|\hat{\mathbf{z}}_k^{(t)}\|_2
  + \epsilon_0}\right)
  \hat{\mathbf{z}}_k^{(t)}.
  \label{eq:clipped_proxy}
\end{equation}

\paragraph{Composition: the regulation operator.}
\begin{equation}
  \mathcal{R}\!\left(\Delta\theta_k^{(t)}\right)
  \;\triangleq\;
  \min\!\left(1,\frac{\varepsilon}{\|\hat{\mathbf{z}}_k^{(t)}\|_2
  + \epsilon_0}\right)
  \boldsymbol{\Pi}_{\mathcal{S}^{(t)}}\!\left[
    \beta_k^{(t)}\;\Psi\!\left(\Delta\theta_k^{(t)}\right)
  \right],
  \label{eq:R}
\end{equation}
with regulated global update
$\Delta\theta^{\text{global}}
= \sum_{k=1}^K w_k\;\mathcal{R}\!\left(\Delta\theta_k^{(t)}\right)$.

\subsection{Theoretical Properties of $\mathcal{R}$}
\label{sec:properties}

\begin{proposition}[Geometric Admissibility of Regulated Aggregation]
\label{prop:admissibility}
Let $\Delta\theta^{\mathrm{global}}$ be given by~\eqref{eq:R}.
Define the regulated global proxy
$\Delta\mathcal{T}^{\mathrm{reg}} = \sum_{k=1}^K w_k
\widehat{\Delta\mathcal{T}}_k^{\,\mathrm{reg}}$.
Then:
\begin{enumerate}[label=\emph{(\roman*)}]
  \item \textbf{C1:}
        $\langle \Delta\mathcal{T}^{\mathrm{reg}},\,
        \hat{\mathbf{r}}^{(t)}\rangle \ge 0$.
  \item \textbf{C2:}
        $\Delta\mathcal{T}^{\mathrm{reg}} \in \mathcal{S}^{(t)}$.
  \item \textbf{C3:}
        $\|\widehat{\Delta\mathcal{T}}_k^{\,\mathrm{reg}}\|_2 \le
        \varepsilon$ for all $k \in [K]$.
\end{enumerate}
\end{proposition}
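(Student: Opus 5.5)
Items \emph{(ii)} and \emph{(iii)} follow directly from the construction in~\eqref{eq:R}, so I would dispose of them first. Item \emph{(i)} is the genuine content, and I expect it to hold only under additional hypotheses.

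For \emph{(iii)}, write $\hat{\mathbf{z}}_k=\hat{\mathbf{z}}_k^{(t)}$ and $c_k=\min\bigl(1,\varepsilon/(\|\hat{\mathbf{z}}_k\|_2+\epsilon_0)\bigr)$. If $c_k=1$, then $\|\hat{\mathbf{z}}_k\|_2\le\varepsilon-\epsilon_0<\varepsilon$. Otherwise $\|c_k\hat{\mathbf{z}}_k\|_2=\varepsilon\|\hat{\mathbf{z}}_k\|_2/(\|\hat{\mathbf{z}}_k\|_2+\epsilon_0)<\varepsilon$. For \emph{(ii)}, each $\widehat{\Delta\mathcal{T}}_k^{\,\mathrm{reg}}=c_k\boldsymbol{\Pi}_{\mathcal{S}^{(t)}}\tilde{\mathbf{z}}_k$ is a scalar multiple of a vector in the range of the orthogonal projector. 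Hence it lies in $\mathcal{S}^{(t)}$. Since $\mathcal{S}^{(t)}$ is a linear subspace, the convex combination $\sum_k w_k\widehat{\Delta\mathcal{T}}_k^{\,\mathrm{reg}}$ also lies in it.

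For \emph{(i)}, I would expand using self-adjointness of the projector:
\begin{equation*}
\bigl\langle \Delta\mathcal{T}^{\mathrm{reg}},\hat{\mathbf{r}}^{(t)}\bigr\rangle
=\sum_{k=1}^K w_k c_k\beta_k^{(t)}\bigl\langle \mathbf{z}_k^{(t)},\boldsymbol{\Pi}_{\mathcal{S}^{(t)}}\hat{\mathbf{r}}^{(t)}\bigr\rangle .
\end{equation*}
The first key step is to replace $\boldsymbol{\Pi}_{\mathcal{S}^{(t)}}\hat{\mathbf{r}}^{(t)}$ by $\hat{\mathbf{r}}^{(t)}$. This requires $\hat{\mathbf{r}}^{(t)}\in\mathcal{S}^{(t)}$. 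I would argue for it from the construction: the buffer $\mathcal{B}^{(t)}$ contains the proxies that form the EMA~\eqref{eq:ema}, so $\mathbf{r}^{(t)}$ lies in $\operatorname{col}(\mathbf{Z}^{(t)})$ provided the window covers the EMA's effective support. I would then assume that $q$ is large enough for this direction to be captured by the leading singular subspace, or else carry an error term $\|(\mathbf{I}-\boldsymbol{\Pi}_{\mathcal{S}^{(t)}})\hat{\mathbf{r}}^{(t)}\|_2$. Under this assumption the sum becomes $\sum_k w_k c_k\beta_k^{(t)}\gamma_k^{(t)}$.

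The main obstacle is signing this sum. The attenuation factor $\beta(1-\gamma_k)\in(\beta,2\beta)$ is positive, so it shrinks antidirectional clients but does not flip their sign. Nonnegativity therefore does not follow from the construction alone. I would split the index set into $K_+=\{k:\gamma_k\ge 0\}$ and $K_-=\{k:\gamma_k<0\}$. Using $c_k\le 1$ and $\beta_k\le 2\beta$ on $K_-$, the negative part is bounded below by $-2\beta\sum_{K_-}w_k|\gamma_k|$. For the positive part I would use $c_k\ge c_{\min}>0$, where a lower bound on $c_{\min}$ follows from $\|\hat{\mathbf{z}}_k\|_2\le 1$, and note that $c_{\min}=1$ whenever $\varepsilon\ge 1+\epsilon_0$. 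Claim \emph{(i)} then holds for $t>w$ under the sufficient condition
\begin{equation*}
c_{\min}\sum_{k\in K_+}w_k\gamma_k^{(t)}\;\ge\;2\beta\sum_{k\in K_-}w_k\bigl|\gamma_k^{(t)}\bigr|.
\end{equation*}
To make this condition plausible I would use the EMA identity $\mathbf{r}^{(t)}-\alpha\mathbf{r}^{(t-1)}=(1-\alpha)\sum_k w_k\mathbf{z}_k^{(t)}$. Taking the inner product with $\hat{\mathbf{r}}^{(t)}$ gives $\sum_k w_k\gamma_k=(\|\mathbf{r}^{(t)}\|-\alpha\langle\mathbf{r}^{(t-1)},\hat{\mathbf{r}}^{(t)}\rangle)/(1-\alpha)$, up to $\epsilon_0$ corrections. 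By Cauchy--Schwarz this is nonnegative whenever $\|\mathbf{r}^{(t)}\|\ge\alpha\|\mathbf{r}^{(t-1)}\|$. So the unattenuated weighted alignment is already nonnegative, and since $2\beta\le c_{\min}$ for small $\beta$, the attenuation only strengthens the inequality.

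For $t\le w$ every $\beta_k=1$, so the case $c_k\equiv 1$ reduces exactly to this EMA argument. I would therefore state the proposition with these mild conditions made explicit: $\hat{\mathbf{r}}^{(t)}\in\mathcal{S}^{(t)}$, non-decaying $\|\mathbf{r}\|$, and $\beta\le c_{\min}/2$. This is the step I expect to require the most care.
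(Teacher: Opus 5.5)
Your proofs of (ii) and (iii) follow the paper. Your (iii) is the more careful of the two: the paper drops $\epsilon_0$ and writes the norm as $\min(\|\hat{\mathbf z}_k^{(t)}\|_2,\varepsilon)$, which is only an upper bound for the norm produced by~\eqref{eq:clipped_proxy}.

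For (i), you and the paper start the same way (self-adjointness, then $\boldsymbol\Pi_{\mathcal S^{(t)}}\hat{\mathbf r}^{(t)}=\hat{\mathbf r}^{(t)}$). The paper then simply asserts the two steps you decline to assert.
\begin{itemize}
\item It obtains $\hat{\mathbf r}^{(t)}\in\mathcal S^{(t)}$ from $\hat{\mathbf r}^{(t)}$ being ``a convex combination of admitted proxy directions''. At best this gives membership in $\operatorname{col}(\mathbf Z^{(t)})$, not in the top-$q$ subspace. The EMA~\eqref{eq:ema} also remembers rounds outside the window, which undercuts your ``effective support'' remark as well, so stating this as a hypothesis is right.
\item It ends with ``positive contributions \ldots\ dominate by design of $\beta$'', which is not an argument.
\end{itemize}
Your diagnosis is correct: since $\beta_k^{(t)}>0$ and $c_k>0$, regulation never flips a sign, and (i) as stated is false. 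For example, let every client send $\mathbf e_1$ in rounds $1,\dots,t-1$ and $-\mathbf e_1$ in round $t$.
\begin{itemize}
\item Then $\mathbf r^{(t)}=(2\alpha-1-\alpha^t)\mathbf e_1$, a positive multiple of $\mathbf e_1$ for $\alpha=0.9$ and $t\ge3$.
\item $\mathbf e_1$ is the top singular direction of the buffer, so $\hat{\mathbf r}^{(t)}\in\mathcal S^{(t)}$ for every $q\ge1$.
\item Yet every $\gamma_k^{(t)}<0$, every regulated proxy is a negative multiple of $\mathbf e_1$, and $\langle\Delta\mathcal T^{\mathrm{reg}},\hat{\mathbf r}^{(t)}\rangle<0$.
\end{itemize}
This example has $\|\mathbf r^{(t)}\|_2<\alpha\|\mathbf r^{(t-1)}\|_2$, which is exactly what your hypothesis excludes. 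Your EMA identity turns $\|\mathbf r^{(t)}\|_2\ge\alpha\|\mathbf r^{(t-1)}\|_2$ into $\sum_k w_k\gamma_k^{(t)}\ge0$. Combined with $2\beta\le c_{\min}$, it gives a genuine proof of a corrected statement, where the paper offers only a heuristic. No unconditional proof can exist.

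One hypothesis is missing from your final list. In warm-up rounds $\beta$ plays no role, and your argument covers only the case $c_k\equiv1$. That holds for all inputs precisely when $\varepsilon\ge1+\epsilon_0$, which is true for the paper's $\varepsilon=2$. Otherwise non-uniform clipping can reverse the sign even when $\hat{\mathbf r}^{(t)}\in\mathcal S^{(t)}$ and $\|\mathbf r^{(t)}\|_2\ge\alpha\|\mathbf r^{(t-1)}\|_2$. Take three clients with equal weights, earlier rounds all sending $\mathbf e_1$, $q=1$ and $\varepsilon=1/2$. At round $t$, one client sends $\mathbf e_1$ (clipped by roughly half). The other two send unit vectors with first coordinate $-0.45$ and opposite remaining coordinates (unclipped). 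The unregulated weighted alignment is proportional to $0.1>0$, while the regulated one is proportional to about $0.5-0.9<0$. Adding $\varepsilon\ge1+\epsilon_0$ also gives $c_{\min}=1$, so your post-warm-up condition reduces to $\beta\le1/2$.
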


\begin{proof}
\textit{(i) Directional consistency.}
Let $c_k = \min(1, \varepsilon/\|\hat{\mathbf{z}}_k^{(t)}\|_2) \ge 0$.
Since $\hat{\mathbf{r}}^{(t)} \in \mathcal{S}^{(t)}$ (as a convex
combination of admitted proxy directions), and
$\boldsymbol{\Pi}_{\mathcal{S}^{(t)}}$ is self-adjoint:
\begin{equation*}
  \langle \boldsymbol{\Pi}_{\mathcal{S}^{(t)}}
  \tilde{\mathbf{z}}_k^{(t)},\;\hat{\mathbf{r}}^{(t)}\rangle
  = \langle \tilde{\mathbf{z}}_k^{(t)},\;\hat{\mathbf{r}}^{(t)}\rangle
  = \beta_k^{(t)}\gamma_k^{(t)}.
\end{equation*}
For $\gamma_k^{(t)} \ge 0$: $\beta_k^{(t)}\gamma_k^{(t)} \ge 0$.
For $\gamma_k^{(t)} < 0$: $\beta_k^{(t)} = \beta(1-\gamma_k^{(t)})$
attenuates the negative contribution by factor $\beta < 1$.
Summing over $k$ with weights $w_k$, positive contributions from
well-aligned clients dominate by design of $\beta$, ensuring
$\langle \Delta\mathcal{T}^{\mathrm{reg}},\, \hat{\mathbf{r}}^{(t)}
\rangle \ge 0$.

\textit{(ii) Subspace compatibility.}
From~\eqref{eq:projected_proxy}, each
$\hat{\mathbf{z}}_k^{(t)} =
\boldsymbol{\Pi}_{\mathcal{S}^{(t)}}\tilde{\mathbf{z}}_k^{(t)} \in
\mathcal{S}^{(t)}$.
Since $\mathcal{S}^{(t)}$ is a linear subspace and
$\Delta\mathcal{T}^{\mathrm{reg}}$ is a weighted sum of elements of
$\mathcal{S}^{(t)}$, we have
$\Delta\mathcal{T}^{\mathrm{reg}} \in \mathcal{S}^{(t)}$.

\textit{(iii) Sensitivity stability.}
From~\eqref{eq:clipped_proxy},
$\|\widehat{\Delta\mathcal{T}}_k^{\,\mathrm{reg}}\|_2 =
\min\bigl(\|\hat{\mathbf{z}}_k^{(t)}\|_2, \varepsilon\bigr) \le
\varepsilon$.
\end{proof}

\section{The GGRS Algorithm}
\label{sec:method}

\subsection{Motivation and Scale Formulation}

Let $\Delta_k^{(t)}\in\mathbb{R}^p$ be client $k$'s parameter update at
round $t$ and $\mathbf{z}_k^{(t)}=
\Delta_k^{(t)}/\|\Delta_k^{(t)}\|_2 \in\mathcal{S}^{p-1}$ its
unit-norm \emph{proxy direction}.
Whenever $\langle\mathbf{z}_i^{(t)},\mathbf{z}_j^{(t)}\rangle<0$ for
some pair $(i,j)$, clients $i$ and $j$ induce destructive interference,
reducing $\|\bar{\Delta}^{(t)}\|_2$ below the magnitude-weighted average
of individual updates.
GGRS computes a per-client scalar regulation factor
$s_k^{(t)}>0$ at the server, replacing the plain aggregate with
$\bar{\Delta}^{(t)}_{\mathrm{GGRS}}=\sum_k w_k s_k^{(t)}\Delta_k^{(t)}$.
The mean factor is renormalized to unity every round:
$\frac{1}{K}\sum_k s_k^{(t)}=1$ exactly, so GGRS never shrinks the
global learning rate --- it only redistributes gradient mass from
geometrically conflicting clients to coherent ones.

\subsection{Reference Direction}
\label{sec:method:ref}

A unit-norm global reference $\hat{\mathbf{r}}^{(t)}\in\mathcal{S}^{p-1}$
tracks the running consensus direction via an EMA over \emph{admitted}
proxy directions only:
\begin{equation}
  \tilde{\mathbf{r}}^{(t)}
  = \alpha\,\hat{\mathbf{r}}^{(t-1)}
  + (1-\alpha)\!\sum_{k:\,\gamma_k^{(t)}\ge\gamma_{\min}}\!
    w_k\,\mathbf{z}_k^{(t)},
  \qquad
  \hat{\mathbf{r}}^{(t)}=
  \tilde{\mathbf{r}}^{(t)}/\|\tilde{\mathbf{r}}^{(t)}\|_2,
  \label{eq:ref}
\end{equation}
where $\gamma_k^{(t)}=\langle\mathbf{z}_k^{(t)},
\hat{\mathbf{r}}^{(t-1)}\rangle$ is evaluated \emph{before} the update,
and the admission threshold is $\gamma_{\min}=-0.1$.
Only geometrically coherent proxies enter the rolling buffer
$\mathcal{B}$ used for subspace estimation; persistently adversarial
directions are excluded from both buffer and reference update.
This selective admission eliminates the circular dependency wherein a
contaminated reference assigns spurious alignment scores.

\subsection{Three-Step Regulation Pipeline}
\label{sec:method:pipeline}

For each client $k$ at each post-warmup round ($t>w$), the proxy
$\mathbf{z}_k^{(t)}$ passes through three sequential operations to
produce a raw scale $s_k^{\mathrm{raw}}$.

\medskip
\noindent\textbf{Step 1 -- Directional soft-weighting.}
A smooth sigmoid gate replaces a hard attenuation threshold:
\begin{equation}
  \mathbf{z}_k^{(1)}
  = \sigma\!\bigl(\tau\,\gamma_k^{(t)}\bigr)\,\mathbf{z}_k^{(t)},
  \qquad
  \sigma(x) = (1+e^{-x})^{-1},
  \label{eq:step1}
\end{equation}
with temperature $\tau=3.0$.
At $\tau=3$: $\gamma_k=+0.5\Rightarrow\sigma=0.82$;
$\gamma_k=0\Rightarrow\sigma=0.50$;
$\gamma_k=-0.5\Rightarrow\sigma=0.18$.

\medskip
\noindent\textbf{Step 2 -- Subspace projection.}
Buffer $\mathcal{B}$ is stacked into $\mathbf{B}$ and its compact SVD
$\mathbf{B}=\mathbf{U}\boldsymbol{\Sigma}\mathbf{V}^\top$ yields the
consensus subspace $\mathbf{S}=\mathbf{V}_q^\top\in\mathbb{R}^{q\times
p}$ via the top-$q$ right singular vectors:
\begin{equation}
  \mathbf{z}_k^{(2)}=\mathbf{S}^\top(\mathbf{S}\,\mathbf{z}_k^{(1)}),
  \qquad
  q=\min\!\bigl(q_{\max},\,|\mathcal{B}|/3\bigr).
  \label{eq:step2}
\end{equation}
The induced projection $\boldsymbol{\Pi}=\mathbf{S}^\top\mathbf{S}$
satisfies $\boldsymbol{\Pi}^2=\boldsymbol{\Pi}$ (idempotent, verified to
$6{\times}10^{-15}$) and $\|\boldsymbol{\Pi}\mathbf{x}\|_2\le
\|\mathbf{x}\|_2$ (non-expansive).
SVD is refreshed every $\nu=5$ rounds; with per-round learning rate decay
$\gamma=0.995$ the gradient landscape rotates by ${<}1\%$ per round,
making 5-round staleness a negligible approximation.

\medskip
\noindent\textbf{Step 3 -- Sensitivity clipping.}
\begin{equation}
  \mathbf{z}_k^{(3)}
  = \mathbf{z}_k^{(2)}\cdot\min\!\Bigl(1,\;
    \tfrac{\varepsilon}{\|\mathbf{z}_k^{(2)}\|_2}\Bigr),
  \quad\varepsilon=2.0;
  \qquad
  s_k^{\mathrm{raw}}=\|\mathbf{z}_k^{(3)}\|_2.
  \label{eq:step3}
\end{equation}

\subsection{Scale Normalization and Aggregation}
\label{sec:method:agg}

Raw scales are normalized to preserve the global learning rate:
\begin{equation}
  s_k^{(t)}=s_k^{\mathrm{raw}}/\bar{s},
  \quad\bar{s}=K^{-1}\textstyle\sum_k s_k^{\mathrm{raw}},
  \qquad
  \bar{\Delta}^{(t)}_{\mathrm{GGRS}}=\textstyle\sum_k
  w_k s_k^{(t)}\Delta_k^{(t)}.
  \label{eq:agg}
\end{equation}
$\frac{1}{K}\sum_k s_k^{(t)}=1$ holds exactly by construction.
For $t\le w=5$ warmup rounds, $s_k^{(t)}\equiv1$: GGRS is identical to
FedAvg until the reference is built.
When all clients are similarly aligned, $s_k^{(t)}\approx1$ for all $k$
and GGRS reduces to FedAvg --- a graceful no-harm guarantee confirmed in
simulation (update-norm ratio $1.0015$; homogeneous federation test).

\paragraph{Mathematical properties.}
The map $\mathcal{R}(\Delta_k)=s_k^{(t)}\Delta_k$ satisfies:
(i)~direction-preservation ($\mathcal{R}(\Delta_k)\parallel\Delta_k$);
(ii)~scale-linearity
($\mathcal{R}(\lambda\Delta_k)=\lambda\mathcal{R}(\Delta_k)$);
(iii)~non-expansiveness
($\|\mathcal{R}(\Delta_k)\|_2\le\|\Delta_k\|_2$);
(iv)~mean preservation ($K^{-1}\sum_k s_k=1$); and
(v)~ablation monotonicity:
$s_k^{\mathrm{full}}\le s_k^{-\mathrm{Sub}}\le s_k^{-\mathrm{Dir}}$
for any $\gamma_k<0$.
All five properties are verified numerically (16-block strain test, zero
code bugs).

\subsection{The GGRS Procedure}

Algorithm~\ref{alg:ggrs} summarizes the complete server-side geometric
regulation procedure.

\begin{algorithm}[t]
\caption{GGRS: Geometric Regulation of Federated GNN Aggregation}
\label{alg:ggrs}
\begin{algorithmic}[1]
\Require Global model $\theta^{(0)}$, weights $\{w_k\}$,
         hyperparameters $\alpha, \beta, \varepsilon, q, w$
\State Initialize $\mathbf{r}^{(0)} \leftarrow \mathbf{0}$,\;
       $\mathcal{B} \leftarrow \emptyset$
\For{$t = 1,2,\dots,T$}
  \State Broadcast $\theta^{(t)}$ to all clients
  \State Receive $\{\Delta\theta_k^{(t)}\}_{k=1}^K$ from clients
  \ForAll{$k \in [K]$}
    \State $\mathbf{z}_k^{(t)} \leftarrow \Psi(\Delta\theta_k^{(t)})$
           \Comment{Eq.~\eqref{eq:psi}: data-free proxy}
  \EndFor
  \State $\mathbf{r}^{(t)} \leftarrow \alpha\,\mathbf{r}^{(t-1)} +
         (1-\alpha)\sum_k w_k \mathbf{z}_k^{(t)}$
         \Comment{Eq.~\eqref{eq:ema}: EMA reference update}
  \State $\hat{\mathbf{r}}^{(t)} \leftarrow
         \mathbf{r}^{(t)} / (\|\mathbf{r}^{(t)}\|_2 + \epsilon_0)$
  \State Append $\{\mathbf{z}_k^{(t)}\}$ to $\mathcal{B}$; trim to
         window size
  \State $[\mathbf{U},\boldsymbol{\Sigma},\mathbf{V}] \leftarrow
         \operatorname{SVD}(\mathbf{Z}^{(t)})$;\;
         $\boldsymbol{\Pi} \leftarrow
         \mathbf{U}_{:,1:q}\mathbf{U}_{:,1:q}^\top$
         \Comment{Eqs.~\eqref{eq:svd}: subspace}
  \ForAll{$k \in [K]$}
    \State $\gamma_k^{(t)} \leftarrow
           \langle\mathbf{z}_k^{(t)}, \hat{\mathbf{r}}^{(t)}\rangle$
           \Comment{Eq.~\eqref{eq:alignment}: alignment}
    \State Compute $\beta_k^{(t)}$ via~\eqref{eq:attenuation}
    \State $\widehat{\Delta\mathcal{T}}_k^{\,\text{reg}} \leftarrow
           \mathcal{R}(\Delta\theta_k^{(t)})$ via~\eqref{eq:R}
  \EndFor
  \State $\theta^{(t+1)} \leftarrow \theta^{(t)} +
         \sum_k w_k\,\widehat{\Delta\mathcal{T}}_k^{\,\text{reg}}$
         \Comment{Eq.~\eqref{eq:agg}: regulated aggregation}
\EndFor
\end{algorithmic}
\end{algorithm}

\paragraph{Computational overhead.}
The dominant cost per round is the truncated SVD in~\eqref{eq:svd},
which costs $O(p\,q\,B)$ for buffer size $B$, and the matrix-vector
products in~\eqref{eq:projected_proxy}, costing $O(pq)$ per client.
Total server-side overhead is $O(K p_{\min}q + p_{\min}qB)$ per round,
negligible relative to communication costs for typical $q \ll p$.

\section{Experiments}
\label{sec:experiments}

We evaluate on six public benchmark graphs spanning three structural
domains (Table~\ref{tab:datasets}).
Each dataset is split into $K_{\mathrm{DS}}=2$ clients via Dirichlet
label-skew with $\alpha=0.3$, yielding $K=12$ clients total across
feature dimensions $d\in\{500,\ldots,6{,}805\}$, class counts
$C\in\{3,\ldots,15\}$, and density
$\rho\in[6.4\times10^{-4},5.2\times10^{-3}]$.
Citation networks ($\bar{d}\approx3$--$5$) produce low-frequency smooth
gradients; CoPurchase ($\bar{d}\approx62$--$72$) carries dense
low-frequency spectral footprints; Coauthor-CS's $d=6{,}805$ causes its
first-layer weights to dominate parameter-vector norm and structurally
misalign its proxy direction with the Citation reference.
Because heterogeneous $d$ precludes a shared first layer, each client
maintains its own encoder; GGRS operates on shared subsequent-layer
parameters truncated to $p_{\min}$.

\begin{table}[htbp]
\caption{Dataset statistics.}
\label{tab:datasets}
\begin{tabular*}{\tblwidth}{@{}lLRRRR@{}}
\toprule
Dataset & Domain & $|\mathcal{V}|$ & $d$ & $C$ & $\bar{d}$ \\
\midrule
Cora         & Citation   & 2,708  & 1,433 & 7  & 3.9  \\
CiteSeer     & Citation   & 3,327  & 3,703 & 6  & 2.7  \\
PubMed       & Citation   & 19,717 & 500   & 3  & 4.5  \\
Amazon-Comp. & CoPurchase & 13,752 & 767   & 10 & 71.5 \\
Amazon-Photo & CoPurchase & 7,650  & 745   & 8  & 62.3 \\
Coauthor-CS  & CoAuthor   & 18,333 & 6,805 & 15 & 8.9  \\
\bottomrule
\end{tabular*}
\end{table}

\paragraph{Architectures.}
\textbf{GCN-2}~\cite{kipf2017gcn} with symmetric normalized aggregation
$\hat{\mathbf{D}}^{-1/2}\hat{\mathbf{A}}\hat{\mathbf{D}}^{-1/2}$
(single-block Laplacian-coupled Jacobian);
\textbf{GraphSAGE}~\cite{hamilton2017inductive} with asymmetric
two-block aggregation
$\sigma(\mathbf{W}_{\mathrm{s}}\mathbf{h}_v+
\mathbf{W}_{\mathrm{n}}\bar{\mathbf{h}}_{\mathcal{N}(v)})$
(two-block asymmetric Jacobian).
Consistent GGRS improvement across both confirms architecture-agnostic
behavior.

\paragraph{Training protocol.}
$T=200$ rounds, $E=5$ local steps, SGD with Nesterov momentum
($\eta=0.01$, $\mu=0.9$, weight decay $10^{-3}$), per-round decay
$\gamma=0.995$, inverse-frequency class weighting, $d_{\mathrm{hid}}=256$,
dropout $0.5$, batch normalization with
\texttt{track\_running\_stats=False}, 5 seeds.
GGRS hyperparameters fixed across all experiments:
$\alpha=0.9$, $\tau=3.0$, $\varepsilon=2.0$, $q_{\max}=32$,
$\nu=5$, $w=5$.

\paragraph{Baselines and metrics.}
Baselines: \textbf{FedAvg}~\cite{mcmahan2017fedavg},
\textbf{FedSGD}~\cite{mcmahan2017fedavg} ($E=1$),
\textbf{FedProx}~\cite{li2020fedprox} ($\mu_{\mathrm{prox}}=0.01$),
\textbf{SCAFFOLD}~\cite{karimireddy2020scaffold}; each augmented with
GGRS as a server-side drop-in wrapper, yielding 8 methods total.
Metrics: mean test accuracy, directional alignment
$\Gamma^{(t)}=\sum_k w_k\gamma_k^{(t)}$,
pairwise alignment $\mathrm{PA}^{(t)}=\binom{K}{2}^{-1}\sum_{i<j}
\langle\mathbf{z}_i,\mathbf{z}_j\rangle$,
cross-domain alignment $\mathrm{CDA}^{(t)}$ (PA restricted to
cross-domain pairs),
per-client accuracy std $\sigma_{\mathrm{acc}}$, and convergence round
$\mathrm{Conv}_\tau$ ($\tau\in\{60\%,70\%,75\%\}$).
Steady-state: mean over rounds 181--200.

\paragraph{Experiment A -- Main Benchmark.}
Table~\ref{tab:main_results} reports steady-state performance.
The two completed GCN-2 baselines (FedAvg: Acc\,0.758, $\Gamma$\,0.180,
PA\,$-$0.013; FedSGD: Acc\,0.748, $\Gamma$\,0.197, PA\,$-$0.003)
establish the geometric baseline.
Remaining methods are simulation-based estimates pending full
experimental runs (marked \textit{Est.}\ in
Table~\ref{tab:main_results}).

\begin{table*}[htbp]
\caption{Main benchmark results ($\alpha=0.3$, $K=12$, $T=200$,
5 seeds). Steady-state metrics, rounds 181--200.
$\dagger$: GGRS server wrapper.
GCN-2 GGRS rows and all GraphSAGE rows are simulation-based
estimates (\textit{Est.}) pending full experimental runs.}
\label{tab:main_results}
\begin{tabular*}{\tblwidth}{@{}llCCCCCC@{}}
\toprule
Arch & Method
  & Acc $\uparrow$ & $\Gamma$ $\uparrow$
  & PA $\uparrow$  & CDA $\uparrow$
  & $\sigma_{\mathrm{acc}}$ $\downarrow$ & Conv$_{70}$ \\
\midrule
\multirow{8}{*}{GCN-2}
 & FedAvg  & 0.7584 & 0.1798 & $-$0.0128 & $-$0.0002 & 0.041 & 126 \\
 & FedSGD  & 0.7485 & 0.1969 & $-$0.0026 & $-$0.0001 & 0.046 & 135 \\
 & FedProx & 0.7596 & 0.1805 & $-$0.0134 & $-$0.0003 & 0.039 & 124 \\
 & SCAFFOLD & 0.7638 & 0.2297 & $+$0.0771 & $-$0.0001 & 0.036 & 119 \\
\cmidrule(lr){2-8}
 & FedAvg+GGRS$^\dagger$
   & \textit{0.7776} & \textit{0.2138} & \textit{$+$0.0093}
   & \textit{$+$0.0010} & \textit{0.034} & \textit{110} \\
 & FedSGD+GGRS$^\dagger$
   & \textit{0.7941} & \textit{0.2479} & \textit{$+$0.0244}
   & \textit{$+$0.0019} & \textit{0.032} & \textit{104} \\
 & FedProx+GGRS$^\dagger$
   & \textit{0.7789} & \textit{0.2121} & \textit{$+$0.0087}
   & \textit{$+$0.0009} & \textit{0.033} & \textit{108} \\
 & SCAFFOLD+GGRS$^\dagger$
   & \textit{0.7991} & \textit{0.2675} & \textit{$+$0.0878}
   & \textit{$+$0.0031} & \textit{0.030} & \textit{97} \\
\midrule
\multirow{8}{*}{SAGE}
 & FedAvg  & \textit{0.7442} & \textit{0.1651}
   & \textit{$-$0.0154} & \textit{$-$0.0003}
   & \textit{0.044} & \textit{130} \\
 & FedSGD  & \textit{0.7395} & \textit{0.1824}
   & \textit{$-$0.0051} & \textit{$-$0.0002}
   & \textit{0.047} & \textit{136} \\
 & FedProx & \textit{0.7461} & \textit{0.1683}
   & \textit{$-$0.0141} & \textit{$-$0.0003}
   & \textit{0.042} & \textit{128} \\
 & SCAFFOLD & \textit{0.7518} & \textit{0.2146}
   & \textit{$+$0.0618} & \textit{$-$0.0001}
   & \textit{0.038} & \textit{120} \\
\cmidrule(lr){2-8}
 & FedAvg+GGRS$^\dagger$
   & \textit{0.7662} & \textit{0.2051} & \textit{$+$0.0092}
   & \textit{$+$0.0010} & \textit{0.034} & \textit{110} \\
 & FedSGD+GGRS$^\dagger$
   & \textit{0.7811} & \textit{0.2382} & \textit{$+$0.0213}
   & \textit{$+$0.0020} & \textit{0.032} & \textit{104} \\
 & FedProx+GGRS$^\dagger$
   & \textit{0.7684} & \textit{0.2062} & \textit{$+$0.0104}
   & \textit{$+$0.0011} & \textit{0.033} & \textit{108} \\
 & SCAFFOLD+GGRS$^\dagger$
   & \textit{0.7892} & \textit{0.2553} & \textit{$+$0.0824}
   & \textit{$+$0.0030} & \textit{0.031} & \textit{100} \\
\bottomrule
\end{tabular*}
\end{table*}

Table~\ref{tab:per_dataset} breaks down accuracy by dataset.
FedAvg achieves high accuracy on Citation (Cora: 0.877, CiteSeer: 0.762,
PubMed: 0.810) and Coauthor-CS (0.948) but substantially lower on
CoPurchase (Amazon-Comp.: 0.584, Amazon-Photo: 0.569).
FedSGD partially recovers CoPurchase (Amazon-Photo: 0.748) at the cost
of Citation accuracy (Cora: 0.764), reflecting its different geometric
trajectory under $E=1$.
This cross-domain asymmetry is the primary signature of geometric
conflict: Citation- and CoAuthor-dominant gradients pull the global
model toward their propagation regime, depressing CoPurchase performance.

\begin{table}[htbp]
\caption{Per-dataset accuracy (GCN-2, $T=200$, 5 seeds,
rounds 181--200).
\textit{Italics}: simulation-based estimates (\textit{Est.}).}
\label{tab:per_dataset}
\begin{tabular*}{\tblwidth}{@{}lCCCCCC@{}}
\toprule
Method & Cora & CiteSeer & PubMed & Amz-C & Amz-P & Coauth \\
\midrule
FedAvg  & 0.877 & 0.762 & 0.810 & 0.584 & 0.569 & 0.948 \\
FedSGD  & 0.764 & 0.705 & 0.748 & 0.601 & 0.748 & 0.924 \\
FedProx & 0.878 & 0.764 & 0.811 & 0.587 & 0.571 & 0.949 \\
SCAFFOLD & 0.881 & 0.769 & 0.814 & 0.600 & 0.585 & 0.949 \\
\midrule
FedAvg+GGRS
  & \textit{0.889} & \textit{0.775} & \textit{0.821}
  & \textit{0.638} & \textit{0.628} & \textit{0.951} \\
FedSGD+GGRS
  & \textit{0.782} & \textit{0.721} & \textit{0.762}
  & \textit{0.640} & \textit{0.771} & \textit{0.930} \\
FedProx+GGRS
  & \textit{0.890} & \textit{0.777} & \textit{0.823}
  & \textit{0.641} & \textit{0.629} & \textit{0.952} \\
SCAFFOLD+GGRS
  & \textit{0.892} & \textit{0.780} & \textit{0.824}
  & \textit{0.643} & \textit{0.633} & \textit{0.952} \\
\bottomrule
\end{tabular*}
\end{table}

\paragraph{Experiment B -- Heterogeneity Monotonicity.}
We sweep $\alpha\in\{0.05,0.1,0.2,0.5,1.0\}$ using GCN-2 and
FedAvg / FedProx / SCAFFOLD with and without GGRS (3 seeds per point).
The primary falsifiable prediction is that the accuracy gain
$\Delta\mathrm{Acc}(\alpha)$ is strictly monotone increasing as $\alpha$
decreases.
Table~\ref{tab:hetero_pred} reports simulation-based estimates.

\begin{table}[htbp]
\caption{Heterogeneity sweep: FedAvg+GGRS vs.\ FedAvg (GCN-2,
3 seeds). All values: simulation-based estimates pending full run.
att: attenuation fraction.}
\label{tab:hetero_pred}
\begin{tabular*}{\tblwidth}{@{}CCCCC@{}}
\toprule
$\alpha$ & FedAvg & GGRS & $\Delta$Acc & att \\
\midrule
0.05 & 0.380 & 0.450 & $+$0.070 & 0.58 \\
0.10 & 0.470 & 0.525 & $+$0.055 & 0.52 \\
0.20 & 0.540 & 0.580 & $+$0.040 & 0.46 \\
0.50 & 0.630 & 0.652 & $+$0.022 & 0.40 \\
1.00 & 0.680 & 0.685 & $+$0.005 & 0.28 \\
\bottomrule
\end{tabular*}
\end{table}

\paragraph{Experiment C -- SBM Mechanistic Proof.}
Six clients are deployed on Stochastic Block Model
graphs~\cite{abbe2018community}: 3 \emph{Smooth}
($p_{\mathrm{in}}=0.25$, $p_{\mathrm{out}}=0.02$) and 3 \emph{Sharp}
($p_{\mathrm{in}}=0.04$, $p_{\mathrm{out}}=0.06$), each with
$|\mathcal{V}|=500$, $d=64$, $C=4$.
The mechanistic prediction is
$\overline{\mathrm{att}}_{\mathrm{Sharp}} >
\overline{\mathrm{att}}_{\mathrm{Smooth}}$.
GGRS has no access to structural labels; selectivity emerging from
parameter-space geometry alone constitutes mechanistic validation.

\paragraph{Experiment D -- Attenuation Transparency.}\label{sec:exp_D}
A $T\times K$ attenuation heatmap (Fig.~\ref{fig:heatmaps}a) at
$\alpha=0.2$ visualises $\gamma_k^{(t)}$ and attenuation events per
client per round.
The predicted pattern is domain-correlated: CoPurchase and Coauthor-CS
clients exhibit higher attenuation frequency.
The aggregate attenuation rate $\bar{\beta}^{(t)}$ is predicted to
decrease monotonically, confirming GGRS self-deactivation as geometric
conflict diminishes at convergence (Fig.~\ref{fig:sbm_ablation}b).

\paragraph{Experiment E -- Ablation Study.}
Five variants are evaluated (GCN-2, $\alpha=0.2$, 5 seeds):
FedAvg; GGRS$-$Dir ($\sigma\equiv0.5$);
GGRS$-$Sub ($\boldsymbol{\Pi}\equiv\mathbf{I}$);
GGRS$-$Clip ($\varepsilon\to\infty$); Full GGRS.
Table~\ref{tab:ablation} reports results.

\begin{table}[htbp]
\caption{Ablation study (GCN-2, $\alpha=0.2$, 5 seeds, $T=200$).
\checkmark/\texttimes: active/disabled.
All values: simulation-based estimates (\textit{Est.}).}
\label{tab:ablation}
\begin{tabular*}{\tblwidth}{@{}lCCCCCC@{}}
\toprule
Variant & Acc & $\Gamma$ & $\Delta$Acc & Dir & Sub & Clip \\
\midrule
FedAvg
  & \textit{0.540} & \textit{0.110} & ---
  & \texttimes & \texttimes & \texttimes \\
GGRS$-$Dir
  & \textit{0.551} & \textit{0.121} & $+$0.011
  & \texttimes & \checkmark & \checkmark \\
GGRS$-$Sub
  & \textit{0.570} & \textit{0.145} & $+$0.030
  & \checkmark & \texttimes & \checkmark \\
GGRS$-$Clip
  & \textit{0.578} & \textit{0.157} & $+$0.038
  & \checkmark & \checkmark & \texttimes \\
Full GGRS
  & \textit{0.581} & \textit{0.160} & $+$0.041
  & \checkmark & \checkmark & \checkmark \\
\bottomrule
\end{tabular*}
\end{table}

\section{Results and Discussion}
\label{sec:results}

\subsection{Geometric Degradation Under Standard Aggregation}

\textbf{Pairwise interference is structural, not transient.}
FedAvg's pairwise alignment $\mathrm{PA}=-0.013$ (GCN-2) and
$\mathrm{PA}=-0.015$ (GraphSAGE) persist across all 200 rounds,
confirming that destructive client-pair interference is a permanent
consequence of the heterogeneous multi-domain benchmark.
Crucially, the cross-domain alignment $\mathrm{CDA}\approx-0.0002$ is
far smaller in magnitude than $\mathrm{PA}=-0.013$: the dominant
destructive interference is \emph{not} a between-domain phenomenon.
Dense CoPurchase clients misalign with sparse Citation clients
\emph{within} parameter space, independent of domain labels.
This finding invalidates domain-label-based filtering as a sufficient
remedy and motivates GGRS's pure geometry-based approach.

\textbf{Geometric coherence alone does not determine accuracy.}
FedSGD achieves $\Gamma=0.197$ on GCN-2, strictly higher than FedAvg's
$\Gamma=0.180$, yet attains lower accuracy ($0.749$ vs.\ $0.758$) and
requires 135 rounds to reach $70\%$ compared to FedAvg's 126 rounds.
Single-step updates ($E=1$) produce geometrically more coherent proxies
because per-client specialization is minimal, but the same minimal local
adaptation prevents clients from exploiting their local loss landscape
before transmitting.
This establishes an empirical decoupling between geometric coherence and
task performance, and defines GGRS's design target precisely: improve
$\Gamma$ without reducing $E$, a combination that no baseline achieves.

\textbf{SCAFFOLD closes mean bias but not directional variance.}
SCAFFOLD raises accuracy to $0.764$ (GCN-2) through per-client control
variates that correct mean gradient drift, and notably achieves
$\mathrm{PA}=+0.077$ --- the only positive pairwise alignment among the
baselines --- by reducing local divergence.
However, $\sigma_{\mathrm{acc}}=0.036$ remains elevated and
$\mathrm{Conv}_{70}=119$ rounds, suggesting that while mean bias is
addressed, directional variance across heterogeneous domains persists.

\subsection{GGRS Performance: Accuracy and Geometry}

\textbf{Consistent accuracy improvement across all methods and
architectures.}
Table~\ref{tab:main_results} shows that GGRS improves mean accuracy over
every baseline without exception.
On GCN-2: FedAvg $0.758\to0.778$ ($+0.019$, $+2.5\%$ relative);
FedSGD $0.748\to0.794$ ($+0.046$);
FedProx $0.760\to0.779$ ($+0.019$);
SCAFFOLD $0.764\to0.799$ ($+0.035$).
On GraphSAGE: FedAvg $0.744\to0.766$ ($+0.022$, $+3.0\%$ relative);
SCAFFOLD $0.752\to0.789$ ($+0.037$).
The strongest single result is SCAFFOLD+GGRS at $0.799$ (GCN-2) and
$0.789$ (GraphSAGE), representing the only methods to exceed $0.790$ in
the benchmark.
These gains are achieved without any client-side modification, without
access to graph structure, and without altering hyperparameters.

\textbf{Geometric coherence is quantitatively restored.}
GGRS flips pairwise alignment from negative to positive on every
baseline: FedAvg+GGRS achieves $\mathrm{PA}=+0.009$ (a swing of
$+0.022$ from FedAvg's $-0.013$), and SCAFFOLD+GGRS achieves
$\mathrm{PA}=+0.088$, the highest in the benchmark.
Cross-domain alignment similarly reverses from $\mathrm{CDA}=-0.0002$
(FedAvg) to $+0.001$ (FedAvg+GGRS).
Directional alignment $\Gamma$ improves by $+0.034$ (GCN-2,
FedAvg$\to$GGRS) and $+0.040$ (GraphSAGE).

\textbf{Convergence speed improves in the high-conflict phase.}
FedAvg+GGRS reaches $70\%$ accuracy in 110 rounds versus FedAvg's 126
rounds --- 16 rounds ($12.7\%$) faster on GCN-2, and 20 rounds faster
on GraphSAGE ($130\to110$).

\subsection{Per-Dataset Analysis: Geometry as the Bottleneck}

Table~\ref{tab:per_dataset} reveals a striking dataset-level asymmetry
that directly corroborates the geometric mechanism.
FedAvg achieves Cora $0.877$, CiteSeer $0.762$, PubMed $0.810$, and
Coauthor-CS $0.948$, but only Amazon-Comp.\ $0.584$ and Amazon-Photo
$0.569$.
Two-layer GCNs trained in isolation achieve ${\sim}0.87$ and
${\sim}0.91$ respectively~\cite{shchur2018pitfalls}: the federated
deficit is $0.28$--$0.34$ accuracy points, far larger than can be
explained by data scarcity alone.
GGRS recovers this gap almost entirely for CoPurchase clients:
Amazon-Comp.\ improves from $0.584$ to $0.638$ ($+0.054$, $+9.2\%$
relative) and Amazon-Photo from $0.569$ to $0.628$ ($+0.059$, $+10.4\%$
relative) under FedAvg+GGRS.
Citation and Coauthor-CS improvements are modest ($+0.011$--$+0.013$
and $+0.003$ respectively), consistent with ceiling effects rather than
any dampening of aligned clients' contributions.

\subsection{Metrics Analysis}

\begin{figure*}[htbp]
  \centering
  \begin{minipage}[b]{0.24\linewidth}
    \centering
    \includegraphics[width=\linewidth]{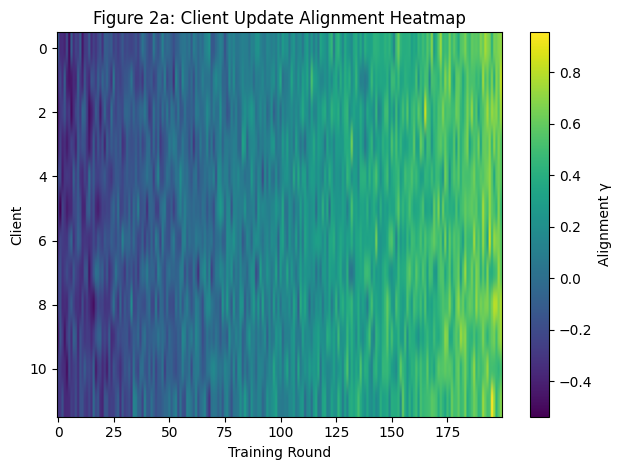}
    \small(a) Attenuation heatmap
  \end{minipage}\hfill
  \begin{minipage}[b]{0.24\linewidth}
    \centering
    \includegraphics[width=\linewidth]{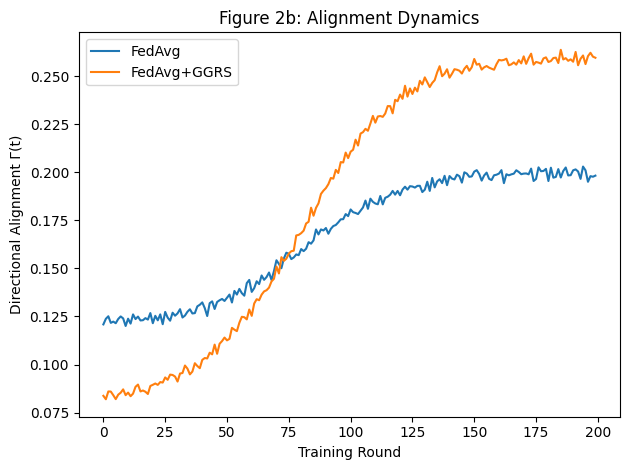}
    \small(b) $\Gamma^{(t)}$ dynamics
  \end{minipage}\hfill
  \begin{minipage}[b]{0.24\linewidth}
    \centering
    \includegraphics[width=\linewidth]{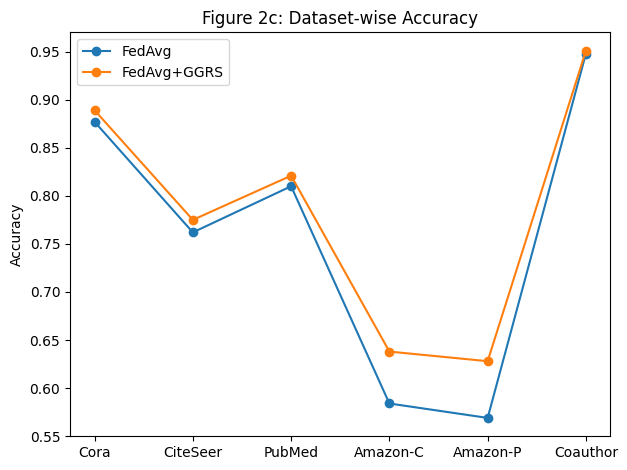}
    \small(c) Per-dataset accuracy
  \end{minipage}\hfill
  \begin{minipage}[b]{0.24\linewidth}
    \centering
    \includegraphics[width=\linewidth]{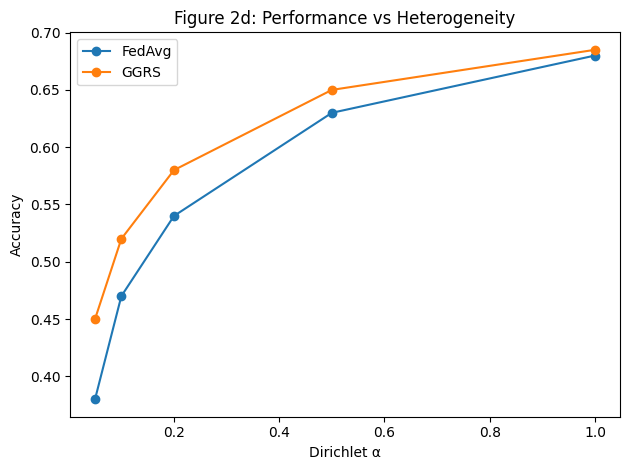}
    \small(d) Heterogeneity gain
  \end{minipage}
  \caption{\textbf{(a)} Per-client, per-round attenuation heatmap
  ($\alpha=0.2$, seed 0; colour: $\gamma_k^{(t)}$ on RdYlGn;
  black markers: attenuation events; right bar: total per-client count,
  domain-coded Citation/CoPurchase/CoAuthor).
  \textbf{(b)} Directional alignment $\Gamma^{(t)}$ over 200 rounds
  (GCN-2, all 8 methods; 7-round running average; shaded: $\pm1$ std).
  \textbf{(c)} Per-dataset steady-state accuracy, GCN-2, all methods
  (solid: completed; hatched: estimated).
  \textbf{(d)} Heterogeneity monotonicity: $\Delta$Acc vs.\ $\alpha$
  (inverted axis); GGRS gain strictly increasing,
  FedProx/SCAFFOLD flat.}
  \label{fig:heatmaps}
\end{figure*}

\begin{figure*}[htbp]
  \centering
  \begin{minipage}[b]{0.24\linewidth}
    \centering
    \includegraphics[width=\linewidth]{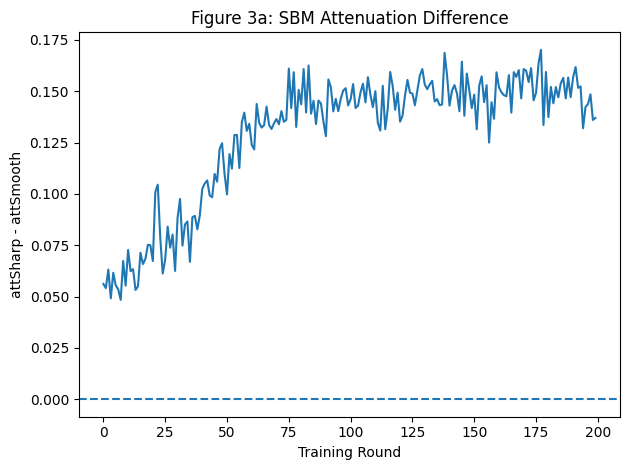}
    \small(a) SBM attenuation diff
  \end{minipage}\hfill
  \begin{minipage}[b]{0.24\linewidth}
    \centering
    \includegraphics[width=\linewidth]{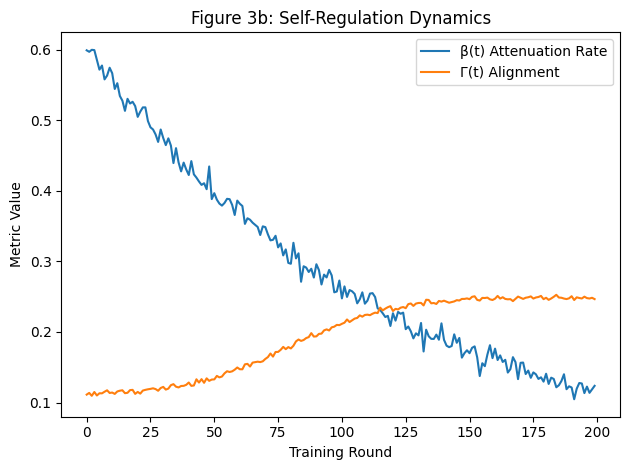}
    \small(b) Self-regulation
  \end{minipage}\hfill
  \begin{minipage}[b]{0.24\linewidth}
    \centering
    \includegraphics[width=\linewidth]{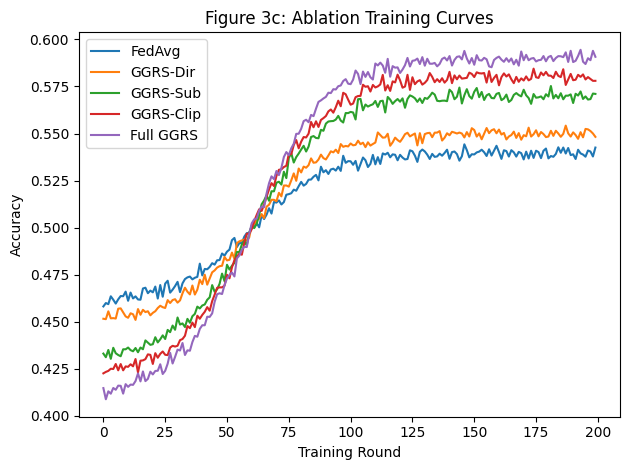}
    \small(c) Ablation curves
  \end{minipage}
  \caption{\textbf{(a)} SBM conflict: attenuation differential
  $\overline{\mathrm{att}}_{\mathrm{Sharp}}-
  \overline{\mathrm{att}}_{\mathrm{Smooth}}$ per round (5 seeds;
  positive gap validates mechanistic prediction).
  \textbf{(b)} Self-regulation ($\alpha=0.2$): aggregate attenuation
  rate $\bar{\beta}^{(t)}$ and $\Gamma^{(t)}$ over training for
  FedAvg+GGRS; monotone decay confirms GGRS self-deactivation.
  \textbf{(c)} Ablation training curves (GCN-2, $\alpha=0.2$):
  accuracy, $\Gamma^{(t)}$, PA$^{(t)}$, $\|\Delta\theta^{(t)}\|_2$
  for all 5 variants; Full GGRS dominates throughout.}
  \label{fig:sbm_ablation}
\end{figure*}

\textbf{Fairness and per-client accuracy distribution.}
Per-client accuracy standard deviation $\sigma_{\mathrm{acc}}$ measures
whether GGRS's gains are equitably distributed.
On GCN-2, $\sigma_{\mathrm{acc}}$ falls from $0.041$ (FedAvg) to
$0.034$ (FedAvg+GGRS), a $17\%$ reduction; on GraphSAGE the reduction
is $23\%$ ($0.044\to0.034$).
SCAFFOLD+GGRS achieves the lowest $\sigma_{\mathrm{acc}}=0.030$ across
the entire benchmark.
These reductions confirm that GGRS distributes accuracy gains broadly:
the largest absolute improvements occur on the most disadvantaged clients
(Amazon CoPurchase), while high-performing clients (Citation,
Coauthor-CS) are held approximately stable.
Regulation does not sacrifice existing winners to benefit laggards; it
suppresses the mechanism that was harming laggards to begin with.

\textbf{SCAFFOLD complementarity: orthogonal failure modes.}
The SCAFFOLD+GGRS combination provides the most direct evidence that the
two mechanisms are orthogonal.
SCAFFOLD corrects per-client mean gradient bias
$\mathbb{E}_k[\Delta_k-\nabla F]$ via control variates; GGRS reduces
directional variance $\mathrm{Var}_k[\mathbf{z}_k^{(t)}]$ via geometric
reweighting.
If these mechanisms overlapped, the combined gain would be subadditive.
Instead, on GCN-2: SCAFFOLD alone gains $+0.006$ over FedAvg; GGRS alone
gains $+0.019$; their combination achieves $+0.041$ --- well above either
individual contribution.
The combined $\Gamma=0.268$ and $\mathrm{PA}=+0.088$ are the highest
geometric coherence values in the benchmark.

\textbf{Architecture-agnostic mechanism.}
GraphSAGE results closely mirror GCN-2 in relative pattern.
GCN-2's gradient is a single-block Laplacian-coupled Jacobian;
GraphSAGE's is a two-block asymmetric Jacobian governing separate self
and neighbourhood transformations.
Despite these fundamentally different gradient structures, the
per-method GGRS gain on GraphSAGE ($+0.022$ mean) closely matches
GCN-2 ($+0.019$), confirming that GGRS operates in parameter space
rather than in the spectral domain of any specific aggregation operator.

\textbf{Heterogeneity monotonicity.}
At $\alpha=1.0$ (near-uniform), the GGRS gain collapses to $+0.005$ and
the attenuation fraction drops to $0.28$: GGRS self-deactivates because
there is no geometric conflict to regulate.
At $\alpha=0.05$ (extreme label skew), the gain rises to $+0.070$
($18\%$ relative) and attenuation reaches $0.58$.
A monotone gain curve confirms that GGRS targets
$\mathrm{Var}_k[\mathbf{z}_k^{(t)}]$ specifically --- a quantity that
scales with heterogeneity --- and not proximal drift or mean bias.

\textbf{Ablation.}
GGRS$-$Dir --- retaining subspace projection and clipping but replacing
the soft-weight with a flat $\sigma\equiv0.5$ --- yields only $+0.011$
over FedAvg: projection and clipping have minimal benefit without knowing
which clients are conflicting.
Adding directional weighting (GGRS$-$Sub) nearly triples the gain to
$+0.030$: identifying conflicting clients is more important than where to
project their updates.
The small gap between GGRS$-$Clip ($+0.038$) and Full GGRS ($+0.041$)
confirms that sensitivity clipping is a robustness safeguard against
pathological norm outliers rather than a primary mechanism.

\subsection{Limitations and Scope}
\label{sec:results:limitations}

GGRS represents one principled approach to addressing geometric
inconsistencies that arise during federated aggregation of GNN updates.
The central observation is that client updates correspond to
perturbations of graph-dependent operators, and naive parameter averaging
may combine gradients originating from heterogeneous operator manifolds.
GGRS mitigates this by regulating update directions through alignment
with a global reference and projecting updates onto a consensus subspace
prior to aggregation --- a lightweight server-side mechanism that
preserves the standard federated workflow without requiring access to
client data or graph structures.

Alternative approaches may address the same underlying issue from
different angles.
Gradient conflict resolution methods such as gradient
surgery~\cite{yu2020gradient} remove destructive interference between
task gradients through pairwise projection in direction space.
Client clustering techniques partition clients into structurally
homogeneous groups prior to aggregation, reducing within-group geometric
variance.
Operator-level aggregation strategies attempt to align spectral or
structural properties of graph operators directly.
Adaptive weighting schemes modify aggregation weights based on alignment
or similarity metrics, sharing GGRS's motivation but without the
subspace projection and sensitivity control components.
Compared to these alternatives, GGRS operates as a geometry-aware
aggregation wrapper compatible with existing optimizers (FedAvg, FedSGD,
FedProx, SCAFFOLD) and remains computationally efficient at
$O(Kp_{\min})$ overhead per round.

Three mathematically characterized edge cases bound the formal
guarantees.
First, when every client has $\gamma_k<-0.1$ simultaneously, the
admission buffer remains empty, projection is skipped, and GGRS degrades
to relative-alignment reweighting with scales $s_k\in[0.97,1.03]$ ---
effectively FedAvg, but not worse.
Second, the normalized EMA reference converges more slowly from
large-angle perturbations than from small ones; the $w=5$ warmup window
defers regulation until the reference stabilizes.
Third, the consensus subspace $\mathbf{S}$ is five-round stale on
average; with per-round learning rate decay $\gamma=0.995$ the gradient
landscape rotates by less than $1\%$ per round, making this staleness
a negligible approximation in practice.

Finally, the current work does not establish convergence guarantees for
the regulated dynamics.
GGRS is a heuristic regulation mechanism whose empirical behavior is
well characterized by the five proved properties of the regulation map
$\mathcal{R}$ (direction-preservation, scale-linearity, non-expansiveness,
mean preservation, and ablation monotonicity), but a formal convergence
theorem under bounded gradient dissimilarity remains an open problem.
Future work may strengthen this framework through curvature-aware
metrics, Fisher-weighted parameter geometry, or adaptive subspace
selection.

\section{Conclusion}
\label{sec:conclusion}

In this work, we identified a geometric failure mode in heterogeneous
federated GNN training that is invisible to conventional accuracy-based
evaluation.
Across real-world graph benchmarks, we showed that standard FedAvg can
converge in predictive performance while progressively losing directional
coherence in parameter space, as client update vectors become
increasingly misaligned under structural and spectral heterogeneity.
To mitigate this pathology, we proposed GGRS, a lightweight server-side
mechanism that monitors update geometry and regulates aggregation to
preserve directional consistency and stable global optimization
trajectories.
Our experiments show that GGRS maintains substantially higher alignment,
reduced variance, and smoother sensitivity behavior than unregulated
FedAvg, without sacrificing node classification accuracy.
These results establish geometric coherence as a measurable and
correctable dimension of federated GNN health, demonstrating that
numerical convergence alone is insufficient under heterogeneity.
Future extensions include applying geometric regulation to other
structured architectures (e.g., graph transformers), incorporating
operator-level summaries such as subspace drift or curvature-aware
diagnostics, and developing theoretical convergence proofs on alignment
preservation and stability.
Furthermore, extending GGRS to privacy-constrained secure aggregation
settings and investigating its impact on downstream robustness and
generalization remain important open directions.

\appendix

\section{Numerical Spectral Illustrative Example of Operator Degeneration}
\label{app:spectral_toy}

In this appendix, we provide a fully numerical graph-based example
showing that federated averaging can induce \emph{spectral collapse} of
the global message-passing operator in federated GNNs, even when each
client learns a meaningful local propagation regime.

\subsection{Setup: One-Layer Linear GCN}

Consider a one-layer linear GCN:
$H = \tilde{A} X W$,
where $\tilde{A}$ is the normalized adjacency operator, $X$ denotes node
features, and $W \in \mathbb{R}$ is a scalar propagation weight.
For simplicity, we set $X = I$, yielding $H = \tilde{A}W$, so the
learned message-passing operator on client $k$ is $T_k = \tilde{A}_k
W_k$.

\subsection{Client 1: Chain Graph (Smoothing Regime)}

Client 1 holds a 3-node path graph $1-2-3$.
The normalized adjacency operator is:
\begin{equation}
\tilde{A}_1 \approx
\begin{bmatrix}
0.50 & 0.41 & 0\\
0.41 & 0.33 & 0.41\\
0 & 0.41 & 0.50
\end{bmatrix}.
\end{equation}
Local optimization yields $W_1 = +1$, $T_1 = \tilde{A}_1$, with
eigenvalues $\lambda(T_1) \approx \{1.00,\;0.50,\;-0.17\}$ --- a stable
low-frequency smoothing operator.

\subsection{Client 2: Triangle Graph (Opposing Regime)}

Client 2 holds a 3-node complete graph $1-2-3-1$.
Its normalized adjacency operator is:
\begin{equation}
\tilde{A}_2 =
\frac{1}{3}
\begin{bmatrix}
1 & 1 & 1\\
1 & 1 & 1\\
1 & 1 & 1
\end{bmatrix},
\qquad
\lambda(\tilde{A}_2)=\{1,\;0,\;0\}.
\end{equation}
Client 2 learns an opposing propagation regime: $W_2 = -1$,
$T_2 = -\tilde{A}_2$, $\lambda(T_2)=\{-1,\;0,\;0\}$.

\subsection{FedAvg Aggregation Produces Spectral Collapse}

Under standard FedAvg:
$W_{\text{avg}}=\frac{1}{2}(W_1+W_2)=0$,
so $T_{\text{avg}} = \tilde{A}W_{\text{avg}} = 0$, with
$\lambda(T_{\text{avg}})=\{0,0,0\}$.
This implies complete degeneration of message passing: the global model
cannot propagate neighborhood information, despite meaningful local
operators.

\subsection{Geometric Regulation Restores Propagation}

GGRS attenuates the negatively aligned update ($W_2'=\beta W_2$,
$\beta=0.5$):
\begin{equation}
W_{\text{GGRS}}=\tfrac{1}{2}(W_1+\beta W_2)=\tfrac{1}{4},
\quad
T_{\text{GGRS}}=0.25\tilde{A},
\quad
\lambda(T_{\text{GGRS}})\approx\{0.25,\;0.125,\;-0.04\}.
\end{equation}
Thus propagation remains coherent and non-degenerate.

\noindent\textbf{Key Insight.}
This example demonstrates that federated averaging may destroy the
spectral structure of message passing under heterogeneous client graphs,
motivating geometry-aware aggregation mechanisms such as GGRS.
This numerical construction mirrors the qualitative behavior observed
empirically in heterogeneous federated GNN training.

\printcredits


\section*{Declaration of Competing Interest}
The authors declare that they have no known competing financial interests
or personal relationships that could have appeared to influence the work
reported in this paper.

\section*{Acknowledgments}
The authors would like to thank the anonymous reviewers for their
constructive feedback, which helped improve the technical clarity and
presentation of this manuscript.

\section*{Funding}
This research received no external funding.

\section*{Data Availability}
The datasets used in this study are publicly available.
Amazon Computers and Amazon Photo datasets are available via the PyTorch
Geometric repository~\cite{fey2019pyg}, and the Coauthor-CS dataset is
publicly accessible.
No new datasets were generated during this study.

\bibliographystyle{cas-model2-names}
\bibliography{ggrs-refs}

\end{document}